\title{Exploring the Use of Shatter for \allsat{} Through Ramsey-Type Problems}
\author{David E. Narváez\\Golisano College of Computing and Information Sciences\\Rochester Institute of Technology\\Rochester, NY, USA 14623\\den9562@rit.edu}
\newcommand{\sat}{\textsc{SAT}}
\newcommand{\allsat}{\textsc{AllSAT}}
\newcommand{\maxsat}{\textsc{MaxSAT}}
\newcommand{\sharpsat}{\textsc{\#SAT}}
\newcommand{\toolname}[1]{#1}
\DeclareMathOperator{\image}{Im}
\let\originalleft\left
\let\originalright\right
\renewcommand{\left}{\mathopen{}\mathclose\bgroup\originalleft}
\renewcommand{\right}{\aftergroup\egroup\originalright}
\newtheorem{theorem}{Theorem}
\newtheorem{lemma}{Lemma}
\newtheorem{definition}{Definition}
\begin{document}

 \maketitle

\begin{abstract}
In the context of \sat{} solvers, \toolname{Shatter} is a popular tool for symmetry breaking on CNF formulas. Nevertheless, little has been said about its use in the context of \allsat{} problems: problems where we are interested in listing all the models of a Boolean formula. \allsat{} has gained much popularity in recent years due to its many applications in domains like model checking, data mining, etc. One example of a particularly transparent application of \allsat{} to other fields of computer science is computational Ramsey theory. In this paper we study the effect of incorporating \toolname{Shatter} to the workflow of using Boolean formulas to generate all possible edge colorings of a graph avoiding prescribed monochromatic subgraphs. Generating complete sets of colorings is an important building block in computational Ramsey theory. We identify two drawbacks in the naïve use of \toolname{Shatter} to break the symmetries of Boolean formulas encoding Ramsey-type problems for graphs: a ``blow-up'' in the number of models and the generation of incomplete sets of colorings. The issues presented in this work are not intended to discourage the use of \toolname{Shatter} as a preprocessing tool for \allsat{} problems in combinatorial computing but to help researchers properly use this tool by avoiding these potential pitfalls. To this end, we provide strategies and additional tools to cope with the negative effects of using \toolname{Shatter} for \allsat{}. While the specific application addressed in this paper is that of Ramsey-type problems, the analysis we carry out applies to many other areas in which highly-symmetrical Boolean formulas arise and we wish to find all of their models.
\end{abstract}

\section{Introduction}

The \allsat{} problem is a variant of the \sat{} problem where we are interested in finding all the models (satisfying assignments) of the input formula. Applications of \allsat{} to fields like model checking \cite{yadgar:09,grumberg:04,mcmillan:02} and data mining \cite{jabbour:13} have drawn attention to \allsat{} solvers. The survey by Toda and Soh \cite{toda:16} summarizes the state-of-the-art in techniques used for solving \allsat{} problems. Many ideas from \sat{} solvers are applicable to \allsat{} solvers with few adaptations. For example, the basic idea behind \allsat{} solvers based on blocking-clauses \cite{mcmillan:02} is to iteratively find a model using a conventional \sat{} solver and then add a clause that blocks that model, until no models are found. This intuitive idea serves as a base for other more advanced approaches to \allsat{} solvers \cite{yu:06}. A technique that has proved effective in \sat{} solvers and is of particular importance for \allsat{} problems is symmetry breaking in Boolean formulas \cite{sakallah:09}. Much research has been devoted to develop methods that prevent \sat{} solvers from exploring redundant search spaces in search for a satisfying assignment. The survey by Walsh \cite{walsh:12} provides a good overview of the current approaches and techniques used to deal with this problem. Extensions of symmetry breaking for \sat{} have been studied for different related problems like \maxsat{} \cite{kopp:15} and \sharpsat{} \cite{kopp:16}, and in the context of Answer Set Programming \cite{devriendt:16b}. \toolname{Shatter} \cite{aloul:06} is a tool that generates \emph{symmetry-breaking clauses} as a preprocessing step to solve Boolean formulas in order to simplify the search space for conventional \sat{} solvers. \toolname{Shatter} has become a popular preprocessing tool since it can be used on any CNF formula encoded in the popular DIMACS format and it has the desirable property that the size of the symmetry-breaking clauses it adds to the formula is linear in the number of variables of the original formula.

To study the effect of \toolname{Shatter}'s symmetry breaking approach in a clean application where symmetries can be formally defined and studied, we look at symmetry breaking for Ramsey-type problems in combinatorial computing. Ramsey-type problems are those related to conditions by which it is impossible to avoid prescribed substructures when partitioning the set of elements of a discrete object. Computational approaches to solve these kinds of problems have recently leveraged advances on \sat{} solvers \cite{heule:17}. In this paper we focus on graph 2-colorings, which are partitions of the set of edges of a graph into two sets. We say that a graph $F$ \emph{arrows} the pair of graphs $\left(G,H\right)$, written $F\rightarrow\left(G,H\right)$, when any 2-coloring of the edges of a graph $F$ yields a monochromatic $G$ in the first color or a monochromatic $H$ in the second color.

It is straightforward to see how 2-color arrowing problems can be encoded into Boolean satisfiability problems. In the case of finite Ramsey numbers, a description of this encoding appears in Zhang's chapter in the Handbook of Satisfiability \cite{zhang:09}. Boolean formulas arising from this encoding have a prominent place in complexity theory because they provide hard examples for resolution. The complexity of resolution of such formulas was studied by Krishnamurthy \cite{krishnamurthy:81}, Krajíček \cite{krajivcek:01}, and Pudlák \cite{pudlak:12}, among others.

On the one hand, determining the satisfiability of a Boolean formula encoding $F\not\rightarrow\left(G,H\right)$ (i.e., solving the \sat{} problem for this formula) can answer the question of whether $F$ arrows the pair $\left(G,H\right)$. On the other hand, finding all the assignments that satisfy the formula (i.e., solving the \allsat{} problem for this formula) can be used to generate the complete set of colorings of the edges of $F$ witnessing $F\not\rightarrow\left(G,H\right)$. Generating these sets of colorings for one parameter $\left(G,H\right)$ is often used as a building block towards finding exact values for Ramsey numbers under different parameters. The ``gluing method'' used to establish $R\left(4,5\right)=25$ \cite{mckay:95}, which is still found embedded in more recent ideas \cite{codish:16}, is an example of such an application. Using \allsat{} solvers for this purpose offloads the combinatorial search to standard tools that are being actively developed \cite{toda:16}, eliminating the need to craft specialized code. A shortcoming of generating these families through the method of encoding the negation of the arrowing property into a Boolean formula is that an \allsat{} solver may generate many colorings that are essentially equivalent among them as it lists all possible models of the formula. The need to generate models that are distinct under some notion of equivalence is not unique to formulations of the arrowing predicate as a Boolean formula. Many symmetry breaking techniques have been developed for specific applications of \sat{}. A great example of a symmetry breaking technique specifically tailored for graph search is the use of \emph{canonizing sets} for small graph searches \cite{itzhakov:16}. Unfortunately, the need to embed these techniques into the Boolean encoding of the non-arrowing property may neglect the advantage of using general-purpose \allsat{} solvers. To avoid this issue, one may be interested in using off-the-shelf symmetry breaking software that is domain independent.

Given Shatter's popularity, it may seem as a good tool to tackle the generation of irredundant sets of colorings for Ramsey-type problems. Nevertheless, in this paper we identify and discuss some drawbacks of using this approach without caution. One of these drawbacks stems from the fact that Shatter's main goal is to avoid adding symmetry-breaking clauses whose number of literals is quadratic in the number of variables while making it easier to find \emph{a} (not \emph{every}) satisfying assignment of the original formula should one exist. This is done at the expense of preserving the number of ``unique'' satisfying assignments of the original formula. Another drawback stems from the fact that \toolname{Shatter} works directly at the Boolean formula level to detect symmetries, so it has no access to domain-specific symmetries that are not carried over by the encoding. The issues presented in this work are not intended to discourage the use of \toolname{Shatter} as a preprocessing tool for \allsat{} problems in combinatorial computing but to help researchers properly use this tool by avoiding these potential pitfalls. To this end, we provide strategies and additional tools to cope with the negative effects of using \toolname{Shatter} for \allsat{}.

The rest of this paper is organized as follows. In Section~\ref{s:background} we review \toolname{Shatter}'s approach to symmetry breaking and give a brief background on Ramsey-type problems. In Sections~\ref{s:numberofmodels} and \ref{s:incompletesets} we address the problems of the number of satisfying assignments of the resulting formulas after preprocessing with \toolname{Shatter} and the generation of incomplete sets of colorings, respectively. Finally, Section~\ref{s:conclusions} adds some concluding remarks and directions for future work.

\section{Background and Definitions}
\label{s:background}

\toolname{Shatter} is a domain-independent preprocessing tool \cite{aloul:06} which is available online\footnote{\url{http://www.aloul.net/Tools/shatter/}} and works on Boolean formulas expressed in the standard DIMACS format, making it very convenient to integrate in any existing workflow. A fundamental construction in \toolname{Shatter}'s formulation of symmetry breaking predicates is a graph that encodes the relationship between clauses and literals of the input formula \cite{aloul:03}. This graph is defined as follows:

\begin{definition}
\label{d:gphi}
 For an input formula $\phi$, Shatter generates the vertex-colored graph $G_\phi$ whose vertex set is the union of the set of clauses of $\phi$ in one color, and the set of literals of $\phi$ in another color, and whose edge set is constructed according to the following rules\footnote{Because of unit propagation, we may assume without loss of generality that every clause in $\phi$ contains at least two variables. This is implemented as a preprocessing step in Shatter.}:
\begin{itemize}
 \item For every variable $x$ in the formula, an edge is added between literals $x$ and $\overline{x}$.
 \item For every clause $C$ in $\phi$ with more than two literals, an edge is added between $C$ and every literal in $C$.
 \item For every clause $C$ in $\phi$ with exactly two literals, an edge is added between the two literals.
\end{itemize}
\end{definition}

The symmetry breaking predicates added by \toolname{Shatter} come from the automorphisms of the graph $G_\phi$. Recall the automorphisms of a graph are the permutations of its vertices that leave its edge set unchanged, and that these form a group. In the case of vertex-colored graphs, these permutations respect the coloring in the sense that no vertex is mapped to a vertex of a different color. A bird's eye view of the process \toolname{Shatter} uses to break the symmetries of a Boolean formula $\phi$ is as follows:
\begin{inparaenum}[(a)]
\item the graph $G_\phi$ is generated as per the rules above,
\item the group of automorphisms of $G_\phi$ is found,
\item a subset of these automorphisms (in particular, \toolname{Shatter} uses the generators of the group, following \cite{crawford:96}) is used to generate symmetry breaking clauses that are added to $\phi$
\end{inparaenum}.

Let $\pi$ be a permutation in the automorphism group of $G_\phi$, and consider the restriction $\pi_\mathbf{x}$ of $\pi$ to the set of literal vertices of $G_\phi$. $\pi_\mathbf{x}$ is a permutation of the literals of $\phi$ because the color of the literal vertices is different from that of the clause vertices. Let $x_1,x_2,\ldots,x_n$ be an ordering of the variables of $\phi$ and let $E_k=\left(x_1\leftrightarrow\pi_\mathbf{x}\left(x_1\right)\right)\land\cdots\land\left(x_k\leftrightarrow\pi_\mathbf{x}\left(x_k\right)\right)$ with $E_0=\top$. Then the key observation behind lexicographical symmetry breaking is that
$$\phi\in\textsc{SAT}\Leftrightarrow\phi\land\left(\bigwedge\limits_{i=0}^{n-1}E_i\rightarrow\left(x_{i+1}\rightarrow \pi_\mathbf{x}\left(x_{i+1}\right)\right)\right)\in\textsc{SAT}$$
for if $\delta$ is a satisfying assignment of $\phi$, $\delta'=\delta\circ\pi_\mathbf{x}$ is a satisfying assignment of $\phi$ as well, and one of $\delta$ and $\delta'$ will precede the other in lexicographical order (considering the strings $\delta^*\left(x_1\right)\delta^*\left(x_2\right)\ldots\delta^*\left(x_n\right)$ for $\delta^*=\delta,\delta'$) thus satisfying the additional constraints. Following the same principle, one can prove that adding lexicographical constraints for other permutations in the automorphism group of $G_\phi$ will also preserve satisfiability while reducing the number of models of the resulting formula. Notice that this formulation of symmetry breaking adds $n$ expressions involving $E_i$ subexpressions, but $E_i$ itself has a literal count that is linear in $i$, so the number of literals added per automorphism is quadratic in the number of variables of $\phi$.

This set of rules corresponds to the most basic version of lexicographical symmetry breaking. \toolname{Shatter} implements several optimizations over this basic workflow which have desirable effects on determining the satisfiability of the input formula. In Sections \ref{s:numberofmodels} and \ref{s:incompletesets}, on the other hand, we study the effect of using this tool for an \allsat{} application: reducing the number of models found through encodings of the non-arrowing property as a Boolean formula. We define the concepts from Ramsey theory that are relevant to our analysis in the following section.

\subsection{Ramsey Colorings}

For a graph $G$, let $V\left(G\right)$ and $E\left(G\right)$ be the set of vertices and edges of $G$, respectively. The order of a graph is the number of vertices of the graph. For two graphs $G$ and $H$, we denote by $\mathcal{S}\left(G,H\right)$ the set of (not necessarily induced) subgraph isomorphisms from $H$ to $G$. More formally, the set $\mathcal{S}\left(G,H\right)$ is the set of injective functions $s$ from $V\left(H\right)$ to $V\left(G\right)$ such that if $\left\{u,v\right\}\in E\left(H\right)$ then $\left\{s\left(u\right),s\left(v\right)\right\}\in E\left(G\right)$.

An edge coloring of $G$ is a function $\sigma$ from the set $E\left(G\right)$ to a finite set of colors. If the cardinality of the set of colors is $k$, the coloring is said to be a $k$-coloring. Two edge colorings of a graph $G$ are \emph{isomorphic} if one can be turned into the other by permuting the vertices of $G$, i.e., two colorings $\sigma_1$ and $\sigma_2$ are isomorphic if there exists a permutation $\pi$ of $V\left(G\right)$ such that for every edge $\left\{u,v\right\}\in E\left(G\right)$, $\sigma_1\left(\left\{\pi\left(u\right),\pi\left(v\right)\right\}\right)=\sigma_2\left(\left\{u,v\right\}\right)$. We define a ``deduplication'' operation $D$ on sets of colorings that preserves only one representative of each of the equivalence classes in the input set of colorings. Formally, for a set of colorings $A_G$ of a graph $G$, $D\left(A_G\right)$ is defined as a set of colorings such that $\sigma_1,\sigma_2\in D\left(A_G\right)$ implies $\sigma_1$ is not isomorphic to $\sigma_2$, and for every $\sigma\in A_G$ there exists a $\tilde{\sigma}\in D\left(A_G\right)$ such that $\sigma$ is isomorphic to $\tilde{\sigma}$. The choice of which representative is kept by this operation is not crucial to our application, so we can assume it is the minimum over some ordering of the colorings in $A_G$. A desirable property of $D\left(A_G\right)$ is that it can be used to recover the full set of colorings so it is an efficient way to store the entire set $A_G$.

For graphs $F$, $G$ and $H$, we say the graph $F$ arrows the pair $\left(G,H\right)$ if every 2-coloring of the edges of $F$ either contains a monochromatic $G$ in the first color or a monochromatic $H$ in the second color. We denote this property by $F\rightarrow\left(G,H\right)$. A 2-coloring of the edges of $F$ that contains no monochromatic $G$ in the first color and no monochromatic $H$ in the second color is said to \emph{witness} the fact that $F\not\rightarrow\left(G,H\right)$. The most classical example of arrowing is the fact that the edges of the complete graph on 6 vertices $K_6$ cannot be 2-colored in a way that avoids monochromatic triangles, so $K_6\rightarrow\left(K_3,K_3\right)$. The edges of $K_5$, on the other hand, can be colored as shown in Figure~\ref{f:k5example} avoiding monochromatic triangles, so $K_5\not\rightarrow\left(K_3,K_3\right)$. Several Ramsey-type problems can be expressed in terms of the arrowing property: the generalized Ramsey number $R\left(G,H\right)$ can be defined as the smallest natural number $N$ such that $K_N$ arrows the pair $\left(G,H\right)$\footnote{The ``classical'' Ramsey numbers are defined in terms of generalized Ramsey numbers as $R\left(n,m\right)=R\left(K_n,K_m\right)$ for positive integers $n$ and $m$.}, while the generalized (edge) Folkman number $F_e\left(G,H;n\right)$, which is another concept commonly studied in Ramsey theory, can be defined as the smallest order of a $K_n$-free graph that arrows the pair $\left(G,H\right)$. We define $\mathcal{C}\left(F;G,H\right)$ as the set of witness colorings of $F\not\rightarrow\left(G,H\right)$. Clearly, $\mathcal{C}\left(F;G,H\right)=\emptyset\Leftrightarrow F\rightarrow\left(G,H\right)$. We will call a set of colorings for parameters $F$, $G$ and $H$ \emph{complete}\footnote{Not to be confused with other uses of \emph{completeness} in the context of symmetry breaking for Boolean formulas. For instance, \cite{itzhakov:16} defines the concepts of \emph{sound} and \emph{complete} symmetry breaking predicates to compare the characteristics of different types symmetry breaking predicates, but in our context we have fixed the type of symmetry breaking predicate to lexicographical symmetry breaking.} if every isomorphism class in $\mathcal{C}\left(F;G,H\right)$ is represented in the set. For instance, $\mathcal{C}\left(F;G,H\right)$ itself is trivially complete, and $D\left(\mathcal{C}\left(F;G,H\right)\right)$ is complete by definition.

To use \allsat{} solvers to generate graph 2-colorings, we exploit the fact that the Boolean domain contains two values $\top$ (\emph{true}) and $\bot$ (\emph{false}) and express the negation of the arrowing property in terms of Boolean formulas. Consider the Boolean formula $\phi\left(F;G,H\right)$ on $\left|E\left(F\right)\right|$ variables $x_{\left\{u,v\right\}}$ for $\left\{u,v\right\}\in E\left(F\right)$ defined as per equation~(\ref{e:nonarrowing}) in Figure~\ref{f:nonarrowing}. This formula essentially states that in every subgraph isomorphism from $G$ to $F$ at least one of the edges involved is ``colored'' \emph{true}, and in every subgraph isomorphism from $H$ to $F$ at least one of the edges involved is ``colored'' \emph{false}. If we define the set of colors as $\left\{\bot,\top\right\}$, then a model of $\phi\left(F;G,H\right)$ can easily be mapped to an edge coloring of $F$ that avoids monochromatic copies of $G$ and $H$ in the first and second color, respectively. It is also easy to see this mapping is one-to-one, i.e., to each model of $\phi\left(F;G,H\right)$ corresponds a coloring witnessing $F\not\rightarrow\left(G,H\right)$ and vice versa. Thus, $\phi\left(F;G,H\right)\in\sat\Leftrightarrow F\not\rightarrow\left(G,H\right)$. Furthermore, one can generate $\mathcal{C}\left(F;G,H\right)$ by using an \allsat{} solver to list every model of $\phi\left(F;G,H\right)$. As mentioned before, an undesirable characteristic of this approach is that $\mathcal{C}\left(F;G,H\right)$ can be large and one is in general more interested in generating the smaller set $D\left(\mathcal{C}\left(F;G,H\right)\right)$ directly from the \sat{} formulation of $F\not\rightarrow\left(G,H\right)$.

\begin{figure*}
{\footnotesize
\begin{equation}
 \label{e:nonarrowing}
\phi\left(F;G,H\right)=\left(\bigwedge\limits_{s\in\mathcal{S}\left(F,G\right)}\bigvee\limits_{\left\{u,v\right\}\in E\left(G\right)}x_{\left\{s\left(u\right),s\left(v\right)\right\}}\right)\land\left(\bigwedge\limits_{s\in\mathcal{S}\left(F,H\right)}\bigvee\limits_{\left\{u,v\right\}\in E\left(H\right)}\overline{x_{\left\{s\left(u\right),s\left(v\right)\right\}}}\right)
\end{equation}}
\caption{The standard encoding for the non-arrowing property as a Boolean formula. Here, $\mathcal{S}\left(X,Y\right)$ denotes the set of non-induced subgraph isomorphisms from $X$ to $Y$ (see Section~\ref{s:background}).}
\label{f:nonarrowing}
\end{figure*}

\begin{figure}
\centering
 \includegraphics{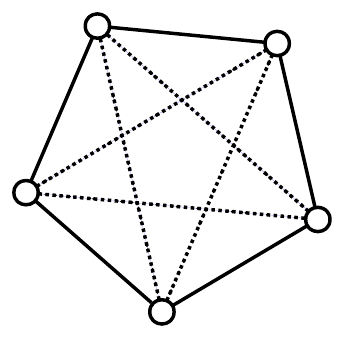}
 \caption{A coloring of $K_5$ avoiding monochromatic triangles, thus witnessing $K_5\not\rightarrow\left(K_3,K_3\right)$. The two colors are represented by solid and dashed lines.}
 \label{f:k5example}
\end{figure}

\section{Number of Satisfying Assignments}
\label{s:numberofmodels}

One of the main improvements of \toolname{Shatter} \cite{aloul:06} over the original formulation of the symmetry breaking clauses \cite{crawford:96} explained in Section~\ref{s:background} is that \toolname{Shatter} adds symmetry-breaking clauses whose number of literals is linear in the number of variables of the input formula. This is done through a relaxation on the symmetry breaking constraints. This relaxation has an undesirable effect in the number of satisfying assignments of the resulting formula. To study this effect, we summarize some of the details behind \toolname{Shatter}'s relaxation of the symmetry breaking constraints.

Using additional equality variables $e_i\equiv\left(x_i\leftrightarrow\pi_\mathbf{x}\left(x_i\right)\right)$, one incurs in a quadratic increase on the length of the formula when adding lexicographical symmetry breaking clauses, as mentioned in Section~\ref{s:background}. \toolname{Shatter} avoids this by using ``chaining predicates''. For this, new variables $l_i\equiv\left(x_i\rightarrow\pi_\mathbf{x}\left(x_i\right)\right)$ (that is, $l_i$ is true if and only if $x_i$ is ``less than or equal to'' $\pi_\mathbf{x}\left(x_i\right)$) and $p_i$ are introduced, together with clauses $p_i\leftrightarrow\left(e_{i-1}\rightarrow\left(l_i\land p_{i+1}\right)\right)$, with $e_0=\top$ and $p_{n+1}=\top$. \toolname{Shatter} also replaces equality variables $e_i$ with ``greater than or equal to'' variables $g_i\equiv\left(\pi_\mathbf{x}\left(x_i\right)\rightarrow x_i\right)$, and relaxes the if and only if clauses for the $p_i$ variables to one way implications. The clauses added by \toolname{Shatter} are then the CNF equivalents of formulas of the form $p_i\rightarrow\left(g_{i-1}\rightarrow l_i\land p_{i+1}\right)$, with $g_0=\top$. It is easy to see these relaxations introduce satisfying assignments that do not satisfy the original symmetry breaking formulation. To get a better feel of how much of a ``blow-up'' in the number of satisfying assignments does this relaxation cause, consider the following lemmas about extensions of partial assignments of the Boolean expressions involved in these two formulations of symmetry breaking clauses. $S_k$ in Lemma~\ref{l:originalmodelcount} corresponds to the original lexicographical symmetry breaking clauses with chaining, while $T_k$ in Lemma~\ref{l:shattermodelcount} corresponds to the clauses added by Shatter.

\begin{lemma}
\label{l:originalmodelcount}
 Let $S_k=e_0\land\bigwedge\limits_{i=1}^{k}\left(p_i\leftrightarrow\left(e_{i-1}\rightarrow\left(l_i\land p_{i+1}\right)\right)\right)\land p_{k+1}$. Then a partial assignment of the variables $e_i$ and $l_i$ has at most one extension that satisfies $S_k$.
\end{lemma}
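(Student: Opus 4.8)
The plan is to observe that once the partial assignment fixes all the $e_i$ and $l_i$, the only remaining (to-be-extended) variables are $p_1,\dots,p_{k+1}$, and to show these are forced to unique values. The key structural fact is that, among the $p$-variables, each biconditional conjunct $p_i\leftrightarrow(e_{i-1}\rightarrow(l_i\land p_{i+1}))$ couples only $p_i$ and $p_{i+1}$, and its right-hand side depends on $p_{i+1}$ together with the already-fixed $e_{i-1}$ and $l_i$. Thus, relative to the partial assignment, the biconditional functionally determines $p_i$ from $p_{i+1}$, setting up a backward recurrence along the chain $p_{k+1},p_k,\dots,p_1$.

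I would carry this out by backward induction on the index. The base case $i=k+1$ is immediate: the standalone conjunct $p_{k+1}$ forces $p_{k+1}=\top$, so this variable has a unique value in any satisfying extension. For the inductive step, assume $p_{i+1}$ takes a determined value in every satisfying extension; since $e_{i-1}$ and $l_i$ are fixed, the right-hand side $e_{i-1}\rightarrow(l_i\land p_{i+1})$ evaluates to a single Boolean value, and the biconditional for index $i$ then forces $p_i$ to equal exactly that value. By induction, $p_k,p_{k-1},\dots,p_1$ are each uniquely pinned down, so the entire extension is unique, giving at most one satisfying extension.

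I would then remark that this argument in fact separates ``at most one'' from ``exactly one'' cleanly: the recurrence always yields a well-defined value at each step, and the resulting assignment satisfies all $k$ biconditionals and the conjunct $p_{k+1}$ by construction, so the only way to obtain \emph{zero} extensions is for the partial assignment to violate the remaining unit conjunct by setting $e_0=\bot$. In every other case there is precisely one extension. Either way the count is bounded by one, which is all the lemma asserts.

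The argument is elementary, so I do not anticipate a genuine obstacle; the single point requiring care is the direction of the induction. Because each biconditional determines $p_i$ from $p_{i+1}$ rather than the reverse, the recurrence must be anchored at the top index via the pinned value $p_{k+1}=\top$ and unwound downward to $p_1$, rather than attempting a forward sweep starting from $p_1$.
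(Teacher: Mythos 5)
Your proof is correct and follows essentially the same route as the paper's: both arguments rest on the observation that each biconditional functionally determines $p_i$ from $p_{i+1}$ (given the fixed $e_{i-1}$ and $l_i$), anchored by the forced value $p_{k+1}=\top$, so the $p$-variables are pinned down one by one. The paper packages this as an induction on $k$ via an auxiliary formula $S'_k$ with $p_{k+1}$ treated as a parameter, whereas you unwind the recurrence directly by backward induction on the index, but the underlying mechanism is identical.
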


\begin{proof}
Let $S'_k=e_0\land\bigwedge\limits_{i=1}^{k}\left(p_i\leftrightarrow\left(e_{i-1}\rightarrow\left(l_i\land p_{i+1}\right)\right)\right)$, that is, $S'_k$ is $S_k$ without fixing $p_{k+1}$. Then $S_k=S'_k\land p_{k+1}$. We first prove by induction in $k$ that a partial assignment of the variables $e_i$ and $l_i$ has at most one extension that satisfies $S_k'$ once $p_{k+1}$ is assigned to a value. For $k=0$, this is trivially true since the value of $S'_0=e_0$ does not depend on the value of $p_1$. We now assume that this property holds for $S'_n$ and prove it for $S'_{n+1}$. Our induction hypothesis implies that for any assignment of the variables $e_i$ and $l_i$ for $i\leq n$, there is at most one assignment of the variables $p_i$, $i\leq n$ that satisfies $S'_n$ once $p_{n+1}$ is assigned to a value. Suppose we assign $p_{n+2}$ to a value, then since $p_{n+1}\leftrightarrow \left(e_{n-1}\rightarrow l_n\land p_{n+1}\right)$ is a subformula of $S'_{n+1}$ so $p_{n+1}$ is uniquely determined by the assignments of $e_{n-1}$, $l_n$ and $p_{n+2}$, which proves our claim. Since $S_k=S_k'\land p_{k+1}$, any satisfying assignment of $S_k$ must fix $p_{n+2}$ to $\top$ and the lemma follows.
\end{proof}

\begin{lemma}
\label{l:shattermodelcount}
 Let $T_k=g_0\land\bigwedge\limits_{i=1}^{k}\left(p_i\rightarrow\left(g_{i-1}\rightarrow\left(l_i\land p_{i+1}\right)\right)\right)\land p_{k+1}$. Let $\phi$ be a partial assignment that assigns the $m$ variables $g_i,g_{i+1},\ldots,g_{i+m-1}$ to $\bot$. Then, if $\phi$ can be extended to an assignment that satisfies $T_k$, it can be extended to at least $2^{m}-1$ assignments that satisfy $T_k$.
\end{lemma}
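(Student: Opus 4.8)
The plan is to exhibit $2^m$ distinct satisfying extensions of $\phi$, which is even slightly stronger than the stated bound. The starting observation is that the hypothesis $g_i=\cdots=g_{i+m-1}=\bot$ makes the $m$ conjuncts of $T_k$ indexed $i+1,\ldots,i+m$ vacuously true: the conjunct $p_j\rightarrow\left(g_{j-1}\rightarrow\left(l_j\land p_{j+1}\right)\right)$ has inner antecedent $g_{j-1}$, and for $j\in\{i+1,\ldots,i+m\}$ this is exactly one of the variables forced to $\bot$. Consequently the ``block'' variables $p_{i+1},\ldots,p_{i+m}$ are no longer tied together by the chaining predicates; each of them occurs only in the conjuncts indexed $i$ through $i+m$, all of which I will arrange to be vacuous.

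First I would take the satisfying extension $\alpha$ promised by the hypothesis and neutralize the one boundary conjunct that could still constrain the block, namely the conjunct indexed $i$, $p_i\rightarrow\left(g_{i-1}\rightarrow\left(l_i\land p_{i+1}\right)\right)$: its inner antecedent $g_{i-1}$ is \emph{not} among the forced-false variables and may well be true, in which case this conjunct forces $p_{i+1}=\top$. To kill this dependence I set $p_1=\cdots=p_i=\bot$. Each conjunct indexed $1,\ldots,i$ then has a false outer antecedent $p_j$ and is satisfied regardless of the $g_j$ and $l_j$; in particular the conjunct indexed $i$ is now vacuous and no longer forces $p_{i+1}$.

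With the prefix zeroed, the block $p_{i+1},\ldots,p_{i+m}$ appears only in vacuous conjuncts, so it may be assigned in all $2^m$ possible ways. I would keep $\alpha$'s values on the downstream variables $p_{i+m+1},\ldots,p_{k+1}$ together with whatever $\phi$ fixes; since the conjuncts indexed $i+m+1,\ldots,k$ and the unit $p_{k+1}$ mention only downstream variables, and the conjunct indexed $i+m$ is vacuous, no active conjunct straddles the block/downstream boundary, so these remain satisfied. Each of the $2^m$ choices for the block therefore yields a genuine satisfying extension of $\phi$, and distinct choices give distinct assignments, for a total of $2^m\geq 2^m-1$.

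The only real work, and the main obstacle, is handling the boundary conjunct indexed $i$ correctly. A naïve argument that perturbs $\alpha$ in place (leaving the prefix untouched) runs into exactly this conjunct: when it is active it pins $p_{i+1}=\top$ and one loses a factor of two, recovering only $2^{m-1}$ rather than the full count; this is presumably the source of the conservative ``$-1$'' in the statement. Decoupling the block from the upstream chain by setting the prefix to $\bot$ sidesteps the issue cleanly, and the remaining obligations are the routine bookkeeping that the zeroed prefix, the free block, and the retained downstream values are mutually consistent across all conjuncts.
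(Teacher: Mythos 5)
Your proposal is correct and rests on the same core observation as the paper's proof: forcing $g_i,\ldots,g_{i+m-1}$ to $\bot$ makes the conjuncts indexed $i+1,\ldots,i+m$ vacuous, so the chaining variables $p_{i+1},\ldots,p_{i+m}$ become essentially unconstrained. Where you differ is precisely at the boundary you single out. The paper's proof flips the block variables of the given satisfying extension in place and asserts that each of $p_{i+1},\ldots,p_{i+m}$ is free, but it only accounts for the occurrence of each $p_j$ as the outer antecedent of the (vacuous) conjunct $j$; the occurrence of $p_{i+1}$ inside the consequent of the conjunct indexed $i$, namely $p_i\rightarrow\left(g_{i-1}\rightarrow\left(l_i\land p_{i+1}\right)\right)$, is not cleanly handled (the sentence asserting that ``$p_i$ can be assigned to $\bot$ or $\top$'' gestures at this, but $p_i$ is itself potentially pinned to $\top$ by the conjunct indexed $i-1$). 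Your repair --- zeroing the prefix $p_1,\ldots,p_i$ so that every conjunct touching a block variable becomes vacuous, while retaining the witnessing extension's values downstream --- closes that gap and delivers the full $2^m$, slightly stronger than the stated $2^m-1$. The one point you leave implicit is the boundary at the other end, which the paper does flag: if $i+m=k+1$ then $p_{i+m}=p_{k+1}$ is pinned to $\top$ by the unit conjunct and only $2^{m-1}$ choices survive; your construction tacitly assumes $i+m\le k$. That assumption is harmless, since $i+m-1=k$ would mean $g_k$ is being assigned even though it occurs nowhere in $T_k$, but it is worth stating explicitly, because in that degenerate case neither your argument nor the paper's actually reaches $2^m-1$ for $m\ge 2$.
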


\begin{proof}
Since $\phi$ assigns $g_i$ to $\bot$, the subformula $p_{i+1}\rightarrow\left(g_i\rightarrow\left(l_{i+1}\land p_{i+2}\right)\right)$ simplifies to $p_{i+1}\rightarrow\top$, regardless of what $l_{i+1}$ and $p_{i+2}$ are assigned to. Then $p_i$ can be assigned to $\bot$ or $\top$ without falsifying $T_k$. Furthermore, $p_{i+1}\rightarrow\left(g_i\rightarrow\left(l_{i+1}\land p_{i+2}\right)\right)$ does not constrain $p_{i+2}$ if $g_i$ is assigned to $\bot$, and $p_{i+2}\rightarrow\left(g_{i+1}\rightarrow\left(l_{i+2}\land p_{i+3}\right)\right)$ simplifies to $p_{i+2}\rightarrow\top$ when $g_{i+1}$ is assigned to $\bot$, so $p_{i+2}$ can be assigned to $\bot$ or $\top$ as well. Following the same reasoning, all the variables $p_{i+1},p_{i+2},\ldots,p_{i+m}$ can be assigned to $\bot$ or $\top$ independently without falsifying $T_k$ except possibly for $p_{i+m}$ if $i+m=k+1$, yielding the desired result.
\end{proof}

The combination of these lemmas means that, in the worst case, the increase in the number of satisfying assignments due to Shatter's relaxations may be exponential in the number of $e_i$ or $g_i$ variables. While in the original lexicographical symmetry breaking formulation the number of $e_i$ variables is the same as the number of variables in the original formula, \toolname{Shatter} reduces the number of variables to consider by eliminating certain tautological subformulas from $T_k$, thus mitigating the effect of the increase in the number of models.

Another important detail in the analysis above is that the ``blow-up'' in the number of models of the formula enhanced with symmetry breaking clauses happens only at the $p_i$ variables: for a fixed assignment of the original variables of $\phi$ there are many possible assignments of the $p_i$ variables that would satisfy the formula output by Shatter. The projective model enumeration technique \cite{gebser:09} in the \toolname{clasp} solver \cite{gebser:12} is particularly useful to nullify this increase in the number of models. This technique allows for outputting models that are different modulo a subset of the variables. Thus combining projective model enumeration with the output of the formula preprocessed by \toolname{Shatter} restores the original goal of symmetry breaking which is reducing the number of satisfying assignments. The subset of variables that one would project to would of course be the original set of variables of the input formula.

In Section \ref{s:conclusions} we point at \toolname{BreakID} \cite{devriendt:16a}, another symmetry breaking tool, which avoids this issue altogether by using the original symmetry breaking predicates without relaxations. This tool is thus more appropriate for \allsat{} applications where symmetry breaking is needed.

\subsection{An Example: $\phi\left(K_8;C_5,C_5\right)$}
\label{s:example}

To illustrate this issue we provide a concrete example. From finite Ramsey theory, we know that $R\left(C_5,C_5\right)=9$ \cite{chartrand:71}, where $C_5$ is the cycle of length 5 (see also \cite{radziszowski:94} for a comprehensive survey of what is known in finite Ramsey theory). This means that $\phi\left(K_9;C_5,C_5\right)\notin\sat$, but $\phi\left(K_8;C_5,C_5\right)\in\sat$, so we are interested in finding all edge colorings of the complete graph $K_8$ witnessing $R\left(C_5,C_5\right)>8$. $\phi\left(K_8;C_5,C_5\right)$ contains 28 variables (corresponding to $\genfrac{(}{)}{0pt}{}{8}{2}$ edges in $K_8$) and 1344 clauses, and there are 1190 models for that formula. From this information, we know that $\left|\mathcal{C}\left(K_8;C_5,C_5\right)\right|=1190$. After processing this formula with Shatter, the resulting formula with symmetry breaking clauses has 70 variables, 1499 clauses, and 824 models. On the other hand, using our own implementation of the chaining method without the relaxation outputs a formula with 165 variables, 1809 clauses, and 5 models. Using \toolname{nauty} \cite{mckay:14} to reduce any of these sets of colorings to pick just one representative from each equivalence class of colorings under isomorphism, we find that $\left|D\left(\mathcal{C}\left(K_8;C_5,C_5\right)\right)\right|=4$, so the chaining method without the relaxation outputs only one redundant coloring.

\section{Incomplete Sets of Colorings}
\label{s:incompletesets}

Perhaps more concerning than the increase in the number of colorings found by preprocessing the formula $\phi\left(F;G,H\right)$ is the fact that enumerating all models of the result of preprocessing $\phi\left(F;G,H\right)$ with \toolname{Shatter} may not yield a complete (according to our definition in Section~\ref{s:background}) set of colorings. On the other hand, there are parameters $F$, $G$, and $H$ for which preprocessing a formula $\phi\left(F;G,H\right)$ with \toolname{Shatter} does output a Boolean formula whose models can be used to build a complete set of colorings for the given parameters. One example was presented in Section~\ref{s:example}, where we were able to generate $D\left(\mathcal{C}\left(K_8;C_5,C_5\right)\right)$ from the models of $\phi\left(K_8;C_5,C_5\right)$ after preprocessing it with Shatter.

The fact that for certain parameters this method will work and for other parameters it will not warrants an investigation. In this section, we take a closer look at this phenomenon and present a sufficient condition under which a workflow that uses \toolname{Shatter} for symmetry breaking produces incomplete sets of colorings. This condition is the presence of \emph{free variables}, which are variables that do not appear in the CNF formula. This is possible in the context of CNF formulas in the DIMACS format because the specification\footnote{\url{ftp://dimacs.rutgers.edu/pub/challenge/satisfiability/doc/satformat.dvi}} states that ``it is not necessary that every variable appear in the instance.'' We study the effect of free variables in Lemma~\ref{l:freevars} and as an immediate consequence, we state a sufficient condition for $\phi\left(F;G,H\right)$ to produce incomplete sets of colorings in Theorem~\ref{t:incomplete}.

\begin{lemma}
 \label{l:freevars}
 Let $\phi$ be a Boolean formula. Then any satisfying assignment of the formula output by preprocessing $\phi$ with \toolname{Shatter} will assign any free variables in $\phi$ to $\bot$.
\end{lemma}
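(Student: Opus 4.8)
The plan is to exhibit, for each free variable, an automorphism of $G_\phi$ whose lexicographic symmetry-breaking predicate forces that variable to $\bot$, and then to confirm that this forcing is not undone by Shatter's relaxation of the predicate.

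First I would describe $G_\phi$ near a free variable $x$. Since $x$ occurs in no clause of $\phi$, neither literal vertex $x$ nor $\overline{x}$ is joined to a clause vertex, nor, by the third rule of Definition~\ref{d:gphi}, to any other literal; hence, by the first rule, the only edge incident to either is $\{x,\overline{x}\}$. Thus $\{x,\overline{x}\}$ is an isolated edge whose endpoints both carry the literal color, and the transposition $\tau=(x\ \overline{x})$ fixing every other vertex preserves both the edge set and the coloring, so $\tau\in\operatorname{Aut}(G_\phi)$. In fact the free-variable literals are precisely the degree-one literal vertices whose unique neighbor is itself degree one, so they are refined into their own cell and $\tau$ is returned as a generator (this is the step I flag as delicate below).

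Second, writing $x=x_j$ in the variable ordering, $\tau$ induces $\pi_\mathbf{x}(x_i)=x_i$ for $i\neq j$ and $\pi_\mathbf{x}(x_j)=\overline{x_j}$. Every conjunct $E_i\rightarrow(x_{i+1}\rightarrow\pi_\mathbf{x}(x_{i+1}))$ of the unrelaxed predicate of Section~\ref{s:background} with $i+1\neq j$ is a tautology, and since $\pi_\mathbf{x}$ fixes $x_1,\dots,x_{j-1}$ we have $E_{j-1}=\top$, so the only surviving conjunct is $x_j\rightarrow\overline{x_j}$, which holds only when $x_j=\bot$. I would then check that this survives Shatter's chaining relaxation. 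Here $l_j\equiv\overline{x_j}$, $g_j\equiv x_j$, and $l_i\equiv g_i\equiv\top$ for $i\neq j$; with $p_1$ (the variable encoding that the comparison holds) asserted, the one-way clauses $p_i\rightarrow(g_{i-1}\rightarrow(l_i\land p_{i+1}))$ for $i<j$, where $g_{i-1}=l_i=\top$, force $p_1=\cdots=p_j=\top$, and the clause for $i=j$ then forces $l_j=\top$, i.e. $x_j=\bot$. The extra models the relaxation permits (Lemma~\ref{l:shattermodelcount}) appear only after some $g_i$ is set to $\bot$, and the first such index is $g_j=x_j$, which occurs only after $x_j=\bot$ has been imposed, so the relaxation leaves the forcing intact.

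The step I expect to be the main obstacle is establishing that Shatter's symmetry detection breaks $\tau$ for \emph{every} free variable rather than absorbing several free-variable swaps into composite generators. For two free variables $x,y$, for instance, the pair $(x\ \overline{x})$ and $(x\ y)(\overline{x}\ \overline{y})$ already generates the whole group, yet once $x=\bot$ is forced the second generator imposes no constraint on $y$. To close this gap I would rely on the isolated edges being refined into their own cells and on the generator search individualizing them hierarchically, so that after one pair is fixed the within-pair swap of the next pair is still discovered as a generator; the remainder is a direct unwinding of the chaining encoding.
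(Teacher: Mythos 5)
Your proposal is correct and takes essentially the same route as the paper: exhibit the within-pair transposition $(x\ \overline{x})$ as an automorphism arising from the isolated edge that each free variable contributes to $G_\phi$, and observe that its lexicographic symmetry-breaking constraint collapses to the unit clause $\overline{x}$, a forcing the chaining relaxation cannot undo. The generator-selection issue you flag as the delicate step is present in the paper's proof as well; it is dispatched there by citing White's theorem on automorphism groups of disjoint unions to assert that the swaps $(x\ \overline{x})$ appear among the generators, and by the follow-up remark that \toolname{Shatter} v0.3 colors each free variable separately, which eliminates the cross-pair permutations and leaves exactly the within-pair swaps.
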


\begin{proof}
 Let $W$ be the set of free variables in $\phi$. The graph $G_\phi$ generated according to the rules summarized in Section~\ref{s:background} has 2 vertices for each free variable (one for the positive literal and one for the negative literal) and an edge between these two vertices. Then the matching on $2\left|W\right|$ vertices (i.e., the disjoint union of $\left|W\right|$ edges) is an induced subgraph of $G_\phi$. From graph theory, we know that the permutations $\left(x,\overline{x}\right)$ for $x\in W$ are generators of the automorphism group of $G_\phi$ (see, for instance, \cite{white:73}, Theorem 3-11). The rules applied by \toolname{Shatter} will turn these permutations into clauses of the form $\overline{x}\lor\overline{x}$ for $x\in W$, so each variable in $W$ will be assigned to $\bot$.
\end{proof}

The above proof is based on the rules presented in \cite{aloul:06} to generate the symmetry breaking clauses. In practice, version 0.3 of \toolname{Shatter}\footnote{\url{http://www.aloul.net/Tools/shatter/Shatter_Linux_v03.tar.gz}} seems to have an additional rule that assigns each free variable to its own color. This change is effective in eliminating permutations of the type $\left(x_i,x_j\right)\left(\overline{x_i},\overline{x_j}\right)$ for $x_i,x_j\in W$ from the set of generators of the automorphism group of $G_\phi$ (because $x_i$ and $x_j$ will now have different colors), yet it does not eliminate permutations of the type $\left(x,\overline{x}\right)$ which are the culprit of Lemma~\ref{l:freevars}. It is important to highlight the fact that the addition of single-clause variables (a clause $\overline{x}\lor\overline{x}$ is equivalent to $\overline{x}$) is not a flaw in the design of the symmetry breaking clauses in \toolname{Shatter} but a feature, as noted in Section 3  of \cite{aloul:02}. This is because processing permutations this way reflects the fact that free variables in the input formula can be fixed to any Boolean value, in this case $\bot$, which is advantageous when trying to determine the satisfiability of the input formula.

As an immediate consequence of Lemma~\ref{l:freevars}, we have the following theorem.

\begin{theorem}
\label{t:incomplete}
 Let $F$, $G$ and $H$ be graphs and suppose an edge $e\in E\left(F\right)$ does not participate in any subgraph isomorphism from $G$ to $F$ or from $H$ to $F$ (i.e., $e\notin\image\left(\sigma\right)$ for any $\sigma\in\mathcal{S}\left(F,G\right)\cup\mathcal{S}\left(F,H\right)$). Then, if $\mathcal{C}\left(F;G,H\right)$ is not empty, the set of colorings obtained from the models of $\phi\left(F;G,H\right)$ after preprocessing it with \toolname{Shatter} is incomplete.
\end{theorem}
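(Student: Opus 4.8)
The plan is to show that if the hypothesized edge $e$ exists, then the variable $x_e$ is a free variable of the formula $\phi\left(F;G,H\right)$, and then invoke Lemma~\ref{l:freevars} together with the assumption $\mathcal{C}\left(F;G,H\right)\neq\emptyset$ to exhibit a witness coloring whose isomorphism class is missing from the output. First I would observe that, by construction in equation~(\ref{e:nonarrowing}), the only variables that occur in $\phi\left(F;G,H\right)$ are those $x_{\left\{s\left(u\right),s\left(v\right)\right\}}$ indexed by an edge $\left\{u,v\right\}\in E\left(G\right)$ mapped through some $s\in\mathcal{S}\left(F,G\right)$, or symmetrically by an edge of $H$ mapped through some $s\in\mathcal{S}\left(F,H\right)$. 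Writing this out, a variable $x_e$ appears in the formula precisely when $e=\left\{s\left(u\right),s\left(v\right)\right\}$ for some subgraph isomorphism $s$ and some edge $\left\{u,v\right\}$ of $G$ or $H$; equivalently, precisely when $e$ lies in the image of some $\sigma\in\mathcal{S}\left(F,G\right)\cup\mathcal{S}\left(F,H\right)$ at an edge of the source graph. The hypothesis states that no such $\sigma$ has $e$ in its image, so $x_e$ occurs in no clause of $\phi\left(F;G,H\right)$, i.e., $x_e$ is a free variable in the sense required by Lemma~\ref{l:freevars}.

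Next I would apply Lemma~\ref{l:freevars} directly: every satisfying assignment of the formula output by preprocessing $\phi\left(F;G,H\right)$ with \toolname{Shatter} assigns $x_e$ to $\bot$. Hence every coloring recovered from those models colors the edge $e$ with the second color (the value $\bot$). To finish, I would use the assumption that $\mathcal{C}\left(F;G,H\right)$ is nonempty to pick some witness coloring $\sigma$, and note that since $e$ participates in no copy of $G$ or $H$, the edge $e$ is entirely unconstrained: flipping the color of $e$ in $\sigma$ still avoids monochromatic copies of $G$ and $H$, so the coloring $\sigma'$ that agrees with $\sigma$ except that $\sigma'\left(e\right)=\top$ is also a witness, and both belong to $\mathcal{C}\left(F;G,H\right)$. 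The key step is then to argue that at least one full isomorphism class is entirely omitted by the \toolname{Shatter} output.

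The main obstacle I anticipate is this last point: being assigned $\bot$ on the single edge $e$ does not, by itself, prove incompleteness, because isomorphism of colorings permutes the vertices of $F$, and the isomorphism class of a witness coloring need not be determined by the color of one distinguished edge. So the careful part of the argument is to produce a witness coloring $\tau\in\mathcal{C}\left(F;G,H\right)$ such that \emph{every} coloring in its isomorphism class assigns $e$ the value $\top$, making the whole class unrepresentable among the \toolname{Shatter} models (each of which forces $x_e=\bot$). I would obtain such a $\tau$ by starting from any witness $\sigma$, setting $\sigma\left(e\right)=\top$ to get a witness, and then, if necessary, further coloring $\top$ every other edge of $F$ that is also free (i.e., in no image of a $\sigma\in\mathcal{S}\left(F,G\right)\cup\mathcal{S}\left(F,H\right)$); since these free edges are exactly those unconstrained by the formula, the result is still a witness, and I would argue that under any vertex permutation the set of free edges is preserved, so no isomorphic copy can send all free edges to $\bot$ as the \toolname{Shatter} output requires. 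Establishing that the free-edge set is invariant under the relevant symmetries, and hence that this class is genuinely missing, is where the real work lies; once that is in hand the incompleteness of the recovered set follows immediately from the definition of a complete set of colorings in Section~\ref{s:background}.
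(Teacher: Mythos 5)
Your proposal is correct and follows the same core route as the paper's proof: show $x_e$ is free in $\phi\left(F;G,H\right)$, invoke Lemma~\ref{l:freevars} to force $x_e=\bot$ in every model of the preprocessed formula, and flip $e$ to $\top$ to produce a witness coloring missing from the output. The difference is that you are \emph{more} careful than the paper at exactly the point you flag as the ``real work.'' The paper's proof stops after observing that the flipped assignment $m'$ is not itself a model of the preprocessed formula and concludes that ``the coloring corresponding to $m'$ will not be represented''; since completeness is defined up to isomorphism, this leaves implicit the possibility that some vertex permutation of $F$ carries a $\bot$-on-$e$ output coloring onto the class of $m'$. Your closing step handles this: the set of free edges is indeed invariant under the relevant permutations, because a coloring isomorphism is (by the paper's definition) an automorphism $\pi$ of $F$, and post-composition with $\pi$ permutes $\mathcal{S}\left(F,G\right)$ and $\mathcal{S}\left(F,H\right)$, hence permutes the union of their edge-images and therefore its complement. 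In fact you do not even need to recolor all free edges to $\top$: any isomorphic copy of the single-flip coloring must assign $\top$ to the free edge $\pi\left(e\right)$, whereas Lemma~\ref{l:freevars} forces every Shatter model to assign $\bot$ to \emph{every} free variable, so the entire isomorphism class is already excluded. Either way your argument is sound and, if anything, repairs a small gap in the paper's own write-up rather than introducing one.
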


\begin{proof}
 Let $\phi'\left(F;G,H\right)$ be the formula output by preprocessing $\phi\left(F;G,H\right)$ with Shatter. Because $e$ does not participate in any subgraph isomorphism from $G$ to $F$, or from $H$ to $F$, the variable $x_{e}$ is free in $\phi\left(F;G,H\right)$. By Lemma~\ref{l:freevars} it will be assigned to $\bot$ in any model of $\phi'\left(F;G,H\right)$. Let $m$ be a model of the formula $\phi'\left(F;G,H\right)$ and let $m'$ be the assignment obtained from $m$ by assigning $x_{e}$ to $\top$. Because $e$ is free in $\phi\left(F;G,H\right)$, the restriction of $m'$ to the variables of $\phi\left(F;G,H\right)$ (i.e., the restriction of $m'$ to the original variables) is a model of $\phi\left(F;G,H\right)$, but $m'$ itself is not a model $\phi'\left(F;G,H\right)$ thus the coloring corresponding to $m'$ will not be represented in the colorings obtained from the models of $\phi'\left(F;G,H\right)$.
\end{proof}

An obvious modification one can do to avoid this issue is to restrict the set of variables in equation~\ref{e:nonarrowing} in Figure~\ref{f:nonarrowing} to $x_e$ for edges $e\in E\left(F\right)$ that are actually involved in some subgraph isomorphism from $G$ to $F$ or from $H$ to $F$. Unfortunately, this does not guarantee the resulting formula will output a complete set of colorings. There are cases where the models of formulas encoding the non-arrowing property do not generate complete sets of colorings even when the formula does not have free variables. This indicates Theorem~\ref{t:incomplete} is not a necessary condition for this phenomenon to occur. Here we provide an example of such a formula, which is related to the example presented in Section \ref{s:example}: Let $K_{ex}$ be the graph obtained from $K_8$ by selecting one of its vertices and removing all but 2 edges incident to it (alternatively, this graph can be obtained by adding a vertex to $K_7$ and connecting it to two of the original vertices). The resulting graph is illustrated in Figure~\ref{f:notnecessary}. We are interested in obtaining all 2-colorings avoiding monochromatic $C_5$ in any of the colors. It is easy to see that $\phi\left(K_{ex};C_5,C_5\right)$ has no free variables: because the only vertex of degree less than 6 has degree 2, it participates in at least one cycle of length 5. Nevertheless, the models of the formula obtained from preprocessing $\phi\left(K_{ex},C_5,C_5\right)$ with \toolname{Shatter} represent 64 of the 90 possible isomorphism classes in $\mathcal{C}\left(K_{ex};C_5,C_5\right)$.

\begin{figure}
 \centering
 \includegraphics[width=4cm]{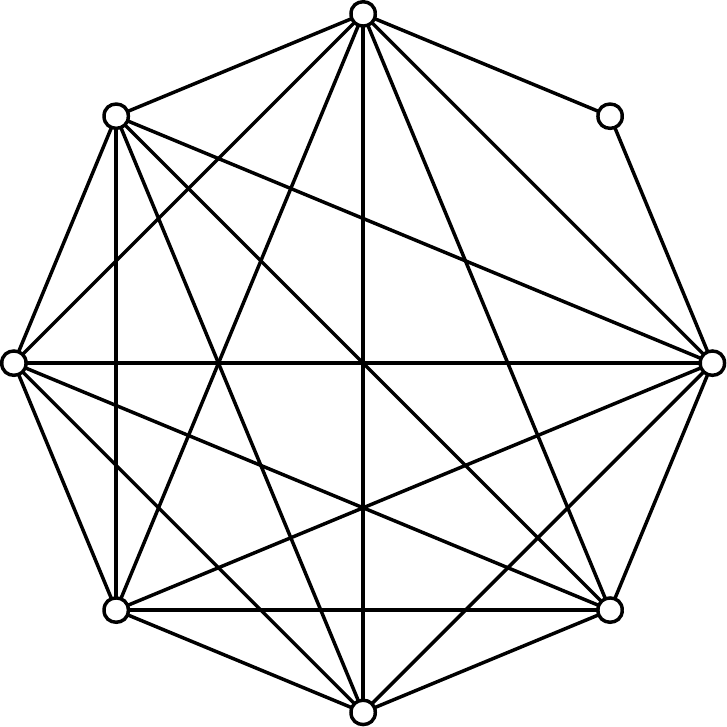}
 \caption{An illustration of the graph $K_{ex}$ in Section \ref{s:incompletesets}. The models of the formula output by \toolname{Shatter} on input $\phi\left(K_{ex};C_5,C_5\right)$ cannot be used to create a complete set of colorings witnessing $K_{ex}\not\rightarrow\left(C_5,C_5\right)$.}
 \label{f:notnecessary}
\end{figure}

\section{Conclusion and Related Work}
\label{s:conclusions}

We looked at the use of \toolname{Shatter}, a popular tool for symmetry breaking of CNF formulas, in the context of \allsat{} problems. Specifically, we presented and discussed two issues related to applying \toolname{Shatter} to formulas encoding the non-arrowing property. The first issue was related to the number of satisfying assignments, showing that \toolname{Shatter} may incur in an increase in the number of models found. We also mentioned the projective enumeration feature in \toolname{clasp} as a tool to deal with this issue. The second issue we discussed was related to the completeness of the set of colorings generated from the Boolean formula after preprocessing with Shatter. We presented a sufficient condition for the set of colorings obtained this way to be incomplete. A direction for future work in this area is completing this analysis by providing necessary conditions.

While \toolname{Shatter} has been an influential tool in the field of symmetry breaking in Boolean formulas for over a decade, it is not the only tool available as a drop-in addition to existing workflows. Recently, \toolname{BreakID} \cite{devriendt:16a} has built upon the symmetry breaking techniques introduced by \toolname{Shatter} and has added some novel ideas like row interchangeability. Even though \toolname{BreakID} implements the same relaxations as \toolname{Shatter}, it does includes an option to no use these relaxations and is thus better suited for \allsat{} applications since it will not introduce additional models.

\section*{Acknowledgments}

The author would like to thank Dr. Edith Hemaspaandra and Dr. Stanisław P. Radziszowski at the Rochester Institute of Technology, as well as the anonymous reviewers of the Thirty-Second AAAI Conference on Artificial Intelligence (AAAI-18) for their valuable comments and suggestions for improvements.

\bibliographystyle{abbrv}

\end{document}